\def\eqref#1{(\ref{#1})}
\def\1{\bm{1}}
\DeclareMathAlphabet{\mathsfit}{\encodingdefault}{\sfdefault}{m}{sl}
\SetMathAlphabet{\mathsfit}{bold}{\encodingdefault}{\sfdefault}{bx}{n}
\def\gE{{\mathcal{E}}}
\def\gF{{\mathcal{F}}}
\def\gG{{\mathcal{G}}}
\def\gV{{\mathcal{V}}}
\newcommand{\E}{\mathbb{E}}
\newcommand{\KL}{D_{\mathrm{KL}}}
\DeclareMathOperator*{\argmax}{arg\,max}
\newtcolorbox{prompt}[1][]{
floatplacement=ht,       
  float,                    
  enhanced,
  breakable,
  colback=black!5,      
  colframe=black!75,    
  fonttitle=\bfseries,  
  arc=4mm,
top=4mm,
bottom=4mm,
left=5mm,
right=5mm,
  title=#1
}
\newsavebox{\workerimg}
\newsavebox{\managerimg}
\savebox{\workerimg}{\includegraphics[height=1.0em]{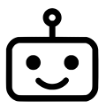}}
\savebox{\managerimg}{\includegraphics[height=0.95em]{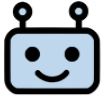}}
\newtheorem{theorem}{Theorem}
\newtheorem{observation}[theorem]{Observation}
\definecolor{Gray}{gray}{0.9}
\title{Graph of Agents: Principled Long Context Modeling by Emergent Multi-Agent Collaboration}
\author{Taejong Joo\thanks{Work done while the author was an intern at Autodesk Research. Email: \texttt{taejong.joo@northwestern.edu} } \\
Department of Industrial Engineering \& Management Sciences, 
Northwestern University, Evanston, IL, USA \\
\AND
Shu Ishida, Ivan Sosnovik, Bryan Lim, Sahand Rezaei-Shoshtari, Adam Gaier, Robert Giaquinto \\
Autodesk Research 
}
\begin{document}

\newpage

\maketitle

\begin{abstract}
As a model-agnostic approach to long context modeling, multi-agent systems can process inputs longer than a large language model's context window without retraining or architectural modifications. However, their performance often heavily relies on hand-crafted multi-agent collaboration strategies and prompt engineering, which limit generalizability. In this work, we introduce a principled framework that formalizes the model-agnostic long context modeling problem as a compression problem, yielding an information-theoretic compression objective. Building on this framework, we propose Graph of Agents (GoA), which dynamically constructs an input-dependent collaboration structure that maximizes this objective. For Llama 3.1 8B and Qwen3 8B across six document question answering benchmarks, GoA improves the average $F_1$ score of retrieval-augmented generation by 5.7\% and a strong multi-agent baseline using a fixed collaboration structure by 16.35\%, respectively. Even with only a 2K context window, GoA surpasses the 128K context window Llama 3.1 8B on LongBench, showing a dramatic increase in effective context length. Our source code is available at \url{https://github.com/tjoo512/graph-of-agents}.
\end{abstract}


\section{Introduction} \vskip -0.05in
From analyzing entire codebases to synthesizing evidence across multiple documents, critical AI applications require reasoning over long contexts. Yet, modern large language models (LLMs) remain fundamentally constrained by their context windows, the maximum number of tokens they can reliably process. When inputs exceed those seen during training, performance quickly deteriorates \citep{anil2022exploring, zhou2024transformers}. Further, some architectures cannot accept inputs beyond their pre-trained length due to constraints such as fixed positional encodings \citep{press2021train}. While retraining on longer inputs seems like a direct solution, it is often infeasible due to prohibitive computational costs and data scarcity. This mismatch between model capabilities and practical needs hinders LLM deployment in complex real-world tasks.

Post-hoc modification methods offer appealingly simple and training-free alternatives. For example, position embedding interpolation \citep{chen2023extending} extends pre-trained position encodings to longer sequences. \citet{vasylenko2025long} propose a sparse attention method that preserves distributional characteristics of the attention weights by restricting information flow to fixed patterns in longer sequences. 
However, they often do not enhance the long context modeling ability, leaving a gap between the claimed context window (the maximum input length) and the effective context window (the range where the model reliably performs) \citep{liu_lost_2023,hsieh2024ruler}. Addressing long context challenges requires paradigms beyond simply increasing input length.

Multi-agent systems offer a promising alternative for long context modeling by leveraging auxiliary LLMs to restructure and compress the input. A prominent paradigm is to orchestrate agents in structured workflows. For example, in Chain-of-Agents (CoA) \citep{zhang_chain_2024}, agents collaborate in a sequential chain to progressively summarize the input (cf. Figure \ref{fig:fig1} (Bottom)). LongAgent \citep{zhao_longagent_2024} employs a leader agent to dispatch tasks to a team of specialized member agents, manually defining a collaborative workflow based on pre-assigned roles. Such strategies have been shown to effectively handle inputs far beyond a model's context window, outperforming post-hoc modification methods \citep{zhang_chain_2024, lee_human-inspired_2024}. However, current multi-agent approaches rely heavily on hand-crafted designs, such as predefined roles, fixed collaboration structures, and task-specific prompts (e.g., a serial communication chain in CoA), that may not generalize across diverse tasks.

\begin{figure}
    \vskip -0.25in
    \centering
    \includegraphics[width=0.85\linewidth]{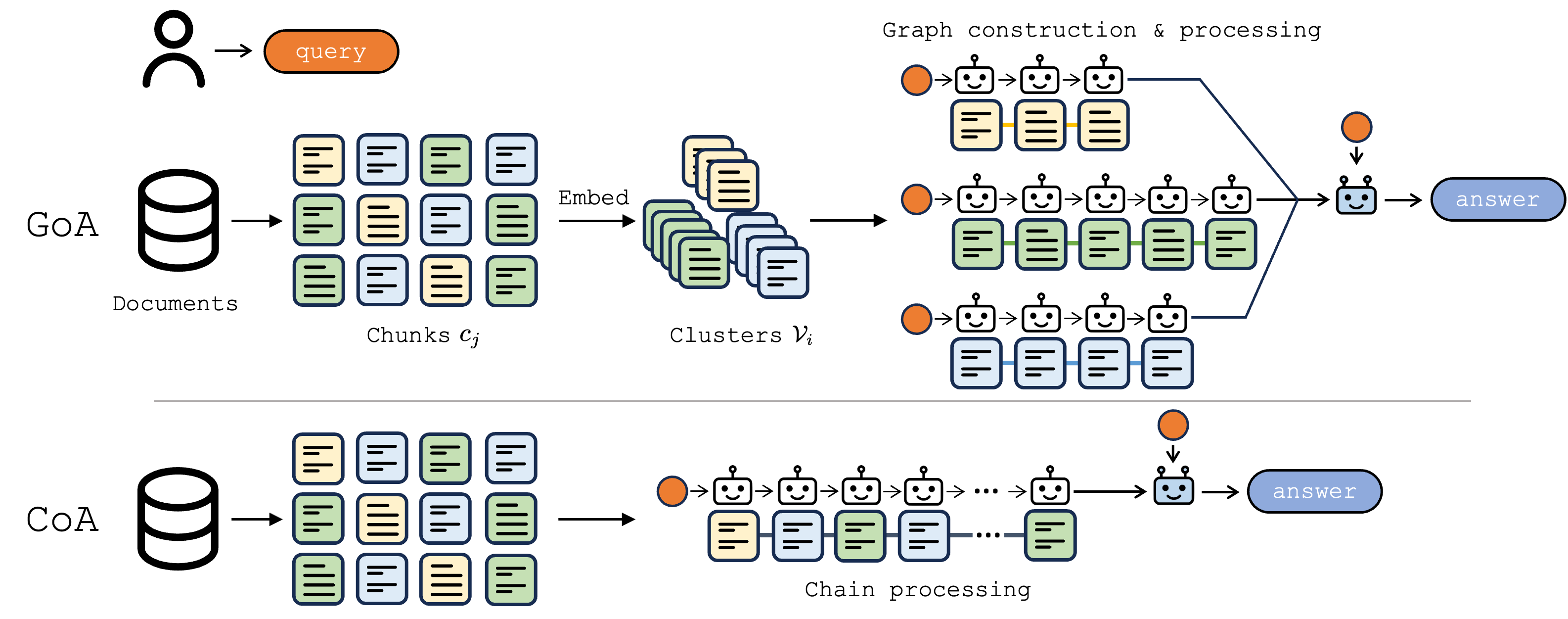}
    \caption{
    A comparison of multi-agent systems for long context modeling, where worker agents (\raisebox{-0.1\ht\workerimg}{\usebox{\workerimg}}) process distinct text chunks and collaborate to generate a compressed summary for a manager agent (\raisebox{-0.1\ht\workerimg}{\usebox{\managerimg}}).
    \textbf{Top: Graph of Agents:} Our proposed method generates an input-dependent collaboration structure. First, agents are grouped into clusters based on the semantic similarity of their assigned chunks. Then, within each cluster, a collaboration structure is adaptively determined to generate an optimal partial summary tailored for each input and query. This allows for flexible and parallelizable compression. 
    \textbf{Bottom: Chain-of-Agents.} 
    A baseline approach where worker agents collaborate in a prescribed serial order for every input. This static structure can be suboptimal when relevant information is not organized linearly in the source document.
}
    \label{fig:fig1}
    \vskip -0.2in
\end{figure}

In this work, we design a principled multi-agent system where an input-dependent optimal collaboration structure emerges from the statistical structure of an input. 
To this end, we formalize the long context modeling problem as a compression problem, yielding a natural information-theoretic objective. 
Building on this foundation, we propose Graph of Agents (GoA), which models multi-agent collaboration as a dynamic graph tailored to each input (cf. Figure \ref{fig:fig1} (Top)). In GoA, agents (nodes) responsible for distinct text chunks are first grouped into clusters based on semantic similarity. 
Then, within each cluster, a communication structure (edges) is determined greedily by selecting the next chunk most relevant to the query given the evolving summary. 
Crucially, this adaptive graph structure guided by the information-theoretic objective allows GoA to generalize across diverse tasks, from locating knowledge from single-document to complex multi-document reasoning, without requiring task-specific hand-crafted workflows. As a result, GoA improves the average $F_1$ score of retrieval-augmented generation by 5.7\% and CoA by 16.35\%, respectively, for Llama 3.1 8B and Qwen3 8B across six single- and multi-document question-answering datasets. 

Our contribution can be summarized as follows:
\textbf{1)} We formalize long context modeling from the perspective of compression, providing unique insights into existing long context modeling approaches and guiding a principled multi-agent system design;
\textbf{2)} We develop a training-free multi-agent system, GoA, where the input-dependent collaboration structure emerges to maximize the principled long context compression objective;
\textbf{3)} GoA effectively handles inputs longer than the context window, notably outperforming a 128K context window Llama 3.1 8B with only a 2K context window model on LongBench.


\section{Background: Model-Agnostic Approach for Long Context Modeling} \label{sec:background}  \vskip -0.05in
We address the problem of applying an LLM $\hat{P}$ with context window $T_{\max}$ to inputs exceeding this limit. 
When a prompt $\texttt{prompt}(x,q)$ constructed from an input $x$ and query $q$ is too long (i.e., $|\texttt{prompt}(x,q)| > T_{\max}$), the model cannot directly compute $\hat{P}(y \mid \texttt{prompt}(x,q))$.
In this setting, the model-agnostic long context modeling aims to find a compressed representation $(\tilde{x}, q) \triangleq (g(x; q), q)$ with a compression function $g$ summarizing input $x$ conditioned on a query $q$, ensuring that $|\texttt{prompt}(g(x; q),q))| \leq T_{\max}$. 
For brevity, we omit the explicit dependence on $q$ in $g$ throughout the paper (i.e., $g(x) \triangleq g(x; q)$).

In the following, we review two popular directions for model-agnostic long context modeling that design $g$ that effectively preserves the information necessary to answer the query. 
Both approaches process long inputs by dividing them into smaller chunks. 
Accordingly, without loss of generality, we assume that $x$ can be divided into $l$ number of chunks $(c_1, c_2, \cdots, c_l)$ with $|c_i| = L$ for all $i \in [l] \triangleq \{ 1, \cdots, l \}$.

\textbf{Retrieval-augmented generation (RAG)} implements compression by retrieving relevant chunks within the context window $T_{\max}$ \citep{lewis2020retrieval}. 
Specifically, RAG first ranks the chunks based on their proximity to the query $q$ (e.g., lexical matching \citep{robertson1995okapi}) and then selects the top $\kappa$ relevant chunks such that $\kappa = \max \{a \in \mathbb{N} \; | \; a \cdot |L| \leq T_{\max} \}$: $\tilde{x}^{\text{RAG}} = [c_{\pi^{\text{RAG}}(1)}, \cdots, c_{\pi^{\text{RAG}}(\kappa)}]$  where $\pi^{\text{RAG}}(i)$ is the index of the $i$-th closest chunk to the query $q$. While conceptually simple, RAG-based approaches have been shown to be highly effective for long context modeling, often outperforming methods that naively attend to the entire input, especially for knowledge-intensive tasks like document-based question answering \citep{sarthi2024raptor,zhao2024longrag}. However, this filtering-style approach risks discarding information that is not directly similar to the query but nevertheless essential for comprehensive reasoning and understanding.

\textbf{Multi-agent systems} iteratively summarizes $x$ by using multiple collaborating LLMs. As a prominent example, we consider the Chain-of-Agents (CoA) framework \citep{zhang_chain_2024} for its broad applicability. 
Unlike many systems that rely on highly specialized agents with hand-crafted roles and prompts, CoA employs a chain of \textit{homogeneous worker agents}, each performing a generic summarization task to collaboratively synthesize the document. 
Specifically, each worker agent $W_i$ produces a communication unit $z_i$, which is a summary of a chunk $c_i$ and the previous communication unit $z_{i-1}$ (with $z_0$ being an empty string), and passes $z_i$ to the next worker $W_{i+1}$.
Given $(x,q)$, this process is formalized as 
\begin{equation} \label{eq:coa_worker}
    z_i = W_i(c_i, z_{i-1}, q), \quad \text{for } i = 1, 2, \cdots, l,
\end{equation}
where $W_i$ involves a structural prompt and an inference by an LLM (cf. \S \ref{appx:prompt}). The final worker output $z_l$ is passed to the LLM manager $M$ for producing an output $\hat{y}$ by $\hat{y} = M(z_l, q)$ as described in \S \ref{appx:prompt}.

While the CoA paper shows that its sequential communication of providing the contextual information $z_{i-1}$ to the next agent is key to outperforming parallel systems \citep{zhao_longagent_2024, lee_human-inspired_2024}, this design is fundamentally heuristic. Its effectiveness is not grounded in a principled objective, which obscures its potential limitations and prevents generalization. 
Similarly, many existing multi-agent systems rely on hand-crafted rules, such as assigning agents specialized roles for atomic tasks like question decomposition or fact verification \citep{zhao_reagent_2025} or employing a supervising agent to manage other specialists within a fixed workflow \citep{zhao_longagent_2024}. 
This lack of a theoretical foundation makes it hard to understand the multi-agent system's behavior and its performance unreliable across diverse tasks and inputs.

\section{Compression Perspective on Long Context Modeling} \label{subsec:compression_setup} \vskip -0.05in
The heuristic nature of existing multi-agent systems highlights a fundamental gap: they lack a formal criterion for optimal compression. Our goal is to derive this criterion from first principles, which will provides the foundation of our novel method in \S \ref{sec:goa}. The natural goal of the compression function $g$ is to produce a representation $\tilde{x}$ that maximizes the performance of the LLM $\hat{P}$. This can be formulated as an optimization problem $\max_g \E_{(X,Q,Y) \sim P}[ \log \hat{P}(Y| g(X), Q)]$ where $(X, Q, Y)$ are random variables of input, query and output with $P$ being the data generating distribution. 

The main limitation of this objective is its lack of predictive power; it can only evaluate a compression function retrospectively, not guide its design. Therefore, we consider the case where the LLM is a Bayes optimal predictor (i.e., it models the underlying posterior probability almost everywhere \citep{jeon2024information}). This idealization isolates the information loss attributable solely to the compression method, while remaining agnostic to characteristics of any particular model. Under this condition, the following proposition establishes a natural information-theoretic objective for compression.

\begin{restatable}{proposition}{equivcompress} \label{prop:equiv_compress_obj}
    For the Bayes optimal predictor $\hat{P}$, the following equivalence relationship holds:
    \begin{equation} \label{eq:mi_objective_compression}
        \E_{(X,Q,Y) \sim P}\left[ \log \hat{P}(Y| g(X), Q)\right] \quad  \iff \quad I(Y; g(X), Q),
    \end{equation}
    where $I(X; Y) \triangleq \E \left[ \log \tfrac{P(X, Y)}{P(X) P(Y)} \right]$ is the mutual information. 
\end{restatable}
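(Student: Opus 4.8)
The plan is to unpack the claimed equivalence by first clarifying what "$\iff$" means here: maximizing the left-hand expectation over $g$ is the same optimization problem as maximizing the mutual information $I(Y; g(X), Q)$, because the two objectives differ by a term that does not depend on $g$. So the core of the proof is an identity, not a logical biconditional.

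First I would use the Bayes-optimality assumption to replace $\hat{P}(Y \mid g(X), Q)$ with the true conditional $P(Y \mid g(X), Q)$ inside the expectation (this holds $P$-almost everywhere, which is exactly what the cited notion of Bayes optimal predictor gives us). Then the left-hand side becomes $\E_{(X,Q,Y)\sim P}[\log P(Y \mid g(X), Q)]$, which is $-H(Y \mid g(X), Q)$, the negative conditional entropy. Next I would write $I(Y; g(X), Q) = H(Y) - H(Y \mid g(X), Q)$ by the standard definition of mutual information between $Y$ and the pair $(g(X), Q)$. Since $H(Y)$ is a property of the data-generating distribution $P$ alone and is untouched by the choice of compression function $g$, we get
\begin{equation}
    \E_{(X,Q,Y)\sim P}\left[\log \hat{P}(Y \mid g(X), Q)\right] = I(Y; g(X), Q) - H(Y),
\end{equation}
so the two quantities are equal up to the additive constant $H(Y)$, and in particular $\arg\max_g$ of one equals $\arg\max_g$ of the other. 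That is the substance of the equivalence statement.

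A couple of technical points I would address to keep the argument honest. One is that $g(X)$ is a (possibly stochastic, if $g$ involves LLM sampling) function of $X$, so I should be careful that $(g(X), Q)$ is a well-defined random variable and that the information quantities are finite; in the discrete/token setting used in the paper this is routine, but I would state it. Another is making precise that "Bayes optimal" means $\hat{P}(\cdot \mid \texttt{prompt}) = P(\cdot \mid \texttt{prompt})$ for the relevant conditioning events, so that the substitution in the first step is legitimate — this is the one place the idealization is actually used, and it is worth flagging that without it the left side would instead be $-H(Y \mid g(X), Q) - \E[\KL(P(\cdot \mid g(X), Q)\,\|\,\hat{P}(\cdot \mid g(X), Q))]$, i.e. an extra nonnegative model-dependent gap.

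The main obstacle is not mathematical depth but rather stating the equivalence cleanly: "$A \iff B$" as written is informal shorthand for "$A$ and $B$ induce the same optimization over $g$," and the proof must make that interpretation explicit before the one-line entropy decomposition finishes the job. I expect the whole argument to be three or four lines once the Bayes-optimality substitution and the $I = H(Y) - H(Y\mid\cdot)$ identity are in place; the only care needed is in the bookkeeping of which terms depend on $g$.
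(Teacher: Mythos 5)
Your proposal is correct and follows essentially the same route as the paper's proof: substitute $P$ for $\hat{P}$ via Bayes optimality, identify the resulting expectation with $-H(Y\mid g(X),Q) = I(Y;g(X),Q) - H(Y)$, and note that $H(Y)$ does not depend on $g$. Your additional remark about the KL gap for imperfect predictors is exactly the content of the paper's Proposition \ref{prop:generalize_compress_obj} in the appendix.
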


The proof is based on an elementary result about the Bayesian posterior (see \S \ref{proof_prof_equiv_compress_obj}). Proposition \ref{prop:equiv_compress_obj} provides a powerful and intuitive objective that reframes the goal of compression for long context modeling as preserving information about the final answer. This result extends to imperfect LLMs, where the mutual information objective remains a valid compression metric, up to a tolerance term reflecting the LLM's error (cf. Proposition \ref{prop:generalize_compress_obj} in \S \ref{proof_generalize_compress_obj}). Unlike prior work that relies heavily on empirical benchmarking, this perspective yields a predictive criterion for evaluating compression $g(X)$. While this provides a principled criterion, the objective's intractability \citep{poole2019variational} and dependence on the unknown output $Y$ prevent its direct use for guiding multi-agent collaboration. Building upon this foundation, we derive a tractable objective in the next section, from which we construct a principled multi-agent system designed for its optimization.


\section{Graph of Agents} \label{sec:goa}  \vskip -0.05in
We introduce Graph of Agents (GoA), a framework that models multi-agent collaboration as an input-dependent graph to optimize the mutual information objective in \eqref{eq:mi_objective_compression}. In this graph, each agent is assigned a text chunk and represents a node. The edges, which act as communication channels, are established based on the statistical relationship between chunks. 
In the following, we develop GoA by addressing three key algorithmic designs:
\textbf{1)} Determine a proper family of graph structures for long context modeling;
\textbf{2)} Formulate the intractable mutual information objective in \eqref{eq:mi_objective_compression} as a tractable surrogate; 
\textbf{3)} Generate an input-dependent graph that maximizes this tractable objective. 
We present pseudocode of GoA in Algorithm \ref{algo:pseudocode}.

\subsection{Inductive Bias on the Graph Structure} \label{subsec:inductive_bias} \vskip -0.1in
Introducing proper inductive biases is an effective way to reduce the complexity of AI system design with desired qualities. To enable a fast and flexible communication structure, we introduce a structural inductive bias by constraining the agent collaboration topology to that of a \textit{linear forest}. A linear forest $\gF$ is a graph whose connected components $\{\gG_i\}_{i \in [k]}$ are all path graphs. Formally, this means that the set of all agents (representing chunks $\{c_1, c_2, \cdots, c_l\}$\footnote{As in CoA \citep{zhang_chain_2024}, we construct a chunk by sequentially adding paragraphs until it reaches the context window. We deliberately chose this generic chunking strategy to focus on the effects of the communication topology, leaving the exploration of more general chunking methods as future work.}) is partitioned into disjoint vertex sets $\gV_1, \cdots, \gV_k$. Each subgraph $\gG_i = (\gV_i, \gE_i)$ is a path graph, where the directed edge set $\gE_i$ is defined by an ordering $\pi_i$ of the agents in $\gV_i$: $\gE_i = \{ (\pi_i(j), \pi_i (j+1)) | j \in [|\gV_i|] \}$, where $\pi_i(j)$ represents the $j$-th worker in $\gG_i$. We denote the family of all possible linear forests satisfying these conditions as $\mathbb{F}$.

GoA's primary goal is to instantiate an optimal input-dependent linear forest $\gF^*(x,q) \in \mathbb{F}$. Before describing how GoA finds this optimal structure in \S \ref{subsec:algo_design}, we first explain the collaboration mechanism for any given forest $\gF \in \mathbb{F}$. Within each path graph $\gG_i \in \gF$, worker agents $\{W_{\pi_i(j)} \}_{j=1}^{|\gV_i|}$ collaborate sequentially to compress the chunks assigned to them (corresponding to $\gV_i$).
Each worker agent $W_{\pi_i(j)}$ takes its assigned chunk $c_{\pi_i(j)}$ and the summary from the previous agent $z_{\pi_i(j-1)}$ to produce an updated summary $z_{\pi_i(j)}$:
\begin{equation} \label{eq:goa_worker}
    z_{\pi_i(j)} = W_{\pi_i(j)}\left(c_{\pi_i(j)}, z_{\pi_i(j-1)}, q \right), \quad \text{for } j = 1, 2, \cdots, |\gV_i|,
\end{equation}
where $W_{\pi_i(j)}$ performs an LLM inference prompted to summarize its inputs $(c_{\pi_i(j)}, z_{\pi_i(j-1)})$ for answering the query $q$ (cf. \S \ref{appx:prompt} for the prompt). The final output $z_{\pi_i(|\gV_i|)}$ summarizes chunks in $\gV_i$.

After all final summaries $\{z_{\pi_i(|\gV_i|)}\}_{i \in [k]}$ are generated, a manager agent $M$ synthesizes them to produce the final answer. The total length of these summaries is designed to fit within $T_{\max}$ by properly setting a maximum output token limit for each worker. The manager's operation is defined as: 
\begin{equation} \label{eq:goa_manager}
    y = M\left(\oplus_{i=1}^{k} z_{\pi_i({|\gV_i|})}, q\right),
\end{equation}
where $\oplus_{i=1}^{m} a_i = a_1 \oplus a_2 \oplus \cdots \oplus a_m$ is the concatenation of $(a_1, a_2, \cdots, a_m)$ and $M$ performs an LLM inference prompted to synthesize an answer from the provided summaries (cf. \S \ref{appx:prompt} for the prompt).

The linear forest structure in GoA provides two key advantages over single-chain methods (e.g., CoA in \eqref{eq:coa_worker}): 
\underline{\textit{Parallelization}}: Inferences across the different subgraphs $(\gG_1, \cdots, \gG_k)$ can be run in parallel, significantly reducing latency. 
\underline{\textit{Flexibility}}: The input-dependent paths $\pi_i$ are determined dynamically for each input, creating more effective data-driven collaboration than a static collaboration structure (e.g., a serial chain).

\subsection{Tractable Compression Objective Formulation} \label{subsec:stat_relationship} \vskip -0.1in
Our goal is to design a compressor $g$ that maximizes the mutual information objective $I(Y; g(X), Q)$ (cf. \eqref{eq:mi_objective_compression}). Direct optimization is infeasible due to its dependence on $Y$. Recent work \citep{liu2024pointwise}, however, reveals a strong connection between $I(Y; g(X), Q)$ and a tractable surrogate $I(Q; g(X))$:  
for particular answer $y$ of a query $q$ and an input $x$, the log-odds of the answer $\log \tfrac{P(y|q,x)}{1 - P(y|q,x)}$ are strongly correlated with the pointwise mutual information $PMI(q, x) = \log \tfrac{P(q, x)}{P(q)P(x)}$. Intuitively, if a question and input co-occur frequently, the input is more likely to contain the information needed to answer the question. This motivates replacing $I(Y; g(X), Q)$ with $I(Q; g(X))$ (see derivation in \S\ref{subsec:equiv_object}).

Although $I(Q; g(X))$ is tractable in principle, estimating the mutual information remains computationally challenging \citep{poole2019variational}. To address this, we exploit the relationship between embedding distance and the distributional similarity (i.e., pointwise mutual information (PMI)). For word embeddings, inner products approximate PMI: $e(w)^T e(w^\prime) \approx PMI(w, w^\prime)$ \citep{levy2014neural,arora2016latent}. Intuitively, since word embedding spaces capture co-occurrence statistics, proximity in embedding space reflects distributional similarity. 
The same intuition extends to sentence embeddings trained with contrastive losses that maximize similarity between related documents while minimizing similarity to irrelevant ones. 
In particular, InfoNCE, which is a widely used contrastive objective, has been shown to approximate PMI at optimality \citep{oord2018representation,zhang2023deep}. Specifically, for chunks $c$ and $c^\prime$, it holds that
\begin{equation} \label{eq:embedding_pmi}
    \texttt{emb}(c)^T \texttt{emb}(c^\prime) 
    = \log \tfrac{P(c | c^\prime)}{P(c)} + \gamma(c^\prime) 
    = PMI(c; c^\prime) + \gamma(c^\prime),
\end{equation}
where $\gamma$ is an arbitrary function and $\texttt{emb}$ denotes an embedding model trained with InfoNCE.

We therefore approximate the intractable optimization problem $\max_g I(Y; g(X), Q)$ by finding the compressed representation $g(x;q)$ maximizing the semantic similarity $\max_{g(x;q)} \texttt{emb}(g(x;q))^T \texttt{emb}(q)$ for all $(x,q)$ pairs. This tractable surrogate allows us to operationalize input-dependent compression while preserving theoretical grounding in the principled mutual information objective.

\subsection{Dynamic Input-Dependent Graph Construction} \label{subsec:algo_design} \vskip -0.1in
Combining \S \ref{subsec:inductive_bias} and \S \ref{subsec:stat_relationship}, GoA instantiates a specific linear forest that generates a compressed representation most semantically aligned with the query.
Specifically, for each $(x,q)$, we aim to find the linear forest
\begin{equation} \label{eq:find_forest}
    \gF^*(x, q) \in \argmax_{\gF: \text{Linear Forest}} \text{sim}\left(\texttt{emb}(q), \texttt{emb}(\tilde{x}^{\text{GoA}}_{\gF}) \right),
\end{equation}
where $\tilde{x}^{\text{GoA}}_{\gF}$ is the final concatenated output from workers of GoA under $\gF$ and $\text{sim}$ is the cosine similarity.

\begin{wrapfigure}{r}{0.55\textwidth}
\begin{minipage}{\linewidth} 
    \hrule 
    \vspace{1mm}
        \captionof{algorithm}{Graph of Agents.}
        \vspace{-3mm}
    \hrule
    \vspace{1mm}
    \hspace*{\algorithmicindent} \textbf{Input} Input $x$, query $q$, cluster size $k$ \\
    \hspace*{\algorithmicindent} \textbf{Output} Prediction $\hat{y}$ \\
    \hspace*{\algorithmicindent} \textbf{Require} Embedding model, clustering algorithm 
    \begin{algorithmic}[1]    
        \State Decompose $x$ into chunks $(c_1, c_2, \cdots, c_l)$
        \State Construct $k$ number of clusters $\gV_1, \cdots, \gV_k$  in the embedding space 
        \For {$i = 1, \cdots, k$}   \Comment{\underline{\textit{Parallelizable}}}
            \State Initialize $z_{\pi_i(0)} = \emptyset$
            \For {$j = 1, \cdots, |\gV_i|$} 
                \State Determine a next chunk $c_{\pi_i(j)}$ semantically closest to $q$ given a context $z_{\pi_i(j-1)}$
                with \eqref{eq:greedy_node_finder}
                \State Worker LLM generates $z_{\pi_i(j)}$ summarizing a chunk $c_{\pi_i(j)}$ and a context $z_{\pi_i(j-1)}$ with \eqref{eq:goa_worker} 
            \EndFor
        \EndFor
        \State Manager LLM generates a prediction $\hat{y}$ from outputs from all subgraphs $\{z_{\pi_i(|\gV_i|)} \}_{i=1}^{k}$ with \eqref{eq:goa_manager}
    \end{algorithmic} 
    \hrule 
    \label{algo:pseudocode}
\end{minipage}
\vskip -0.2in
\end{wrapfigure}
Directly solving the combinatorial optimization problem in \eqref{eq:find_forest} is impractical since evaluating objective value of $\gF$ requires executing all worker communication chains via sequential LLM inferences (cf. \eqref{eq:goa_worker}). Instead, we design a greedy graph construction procedure that clusters chunks and sequentially builds paths within each cluster to maximize query similarity.

\textit{Node construction.}
The linear tree prevents a worker from leveraging contextual information from chunks in different subgraphs. 
To minimize its impact, we group semantically related chunks together with a clustering algorithm that finds $k$ clusters $\{\gV_1, \cdots, \gV_k\}$ in the embedding space.   
This reduces redundancy across subgraphs while preserving local coherence.

\textit{Edge construction.} 
Within each cluster $\gV_i$, we construct a path $\pi_i(1{:}|\gV_i|)$ under which the final worker output is semantically closest to the query, aiming to maximize $\text{sim}\left(\texttt{emb}(q), \texttt{emb}(z_{\pi_i({|\gV_i|})}) \right)$ with respect to $\pi_i(1:|\gV_i|)$. 
To this end, we implement a greedy algorithm that sequentially selects a chunk that maximally increases the similarity given the current context:
\begin{equation} \label{eq:greedy_node_finder}
    \pi_i(j) \in \argmax_{c \in \gV_i, c \notin \pi_i(1:j-1)} \text{sim}(\texttt{emb}(q), \texttt{emb}(z_{\pi_i(j-1)} \oplus  c)) , \quad j = 1, 2, \cdots, |\gV_i|.
\end{equation}

This greedy step progressively builds a summary that is increasingly aligned with the query's informational needs, ensuring that each chunk is selected based on its marginal utility given the context generated so far.

\textbf{Discussion.}
In Appendix \ref{appx:discussion}, we discuss advantages of GoA's worker collaboration in \eqref{eq:goa_worker} as a flexible contextual compressor.
Compared to GoA, multi-agent systems without contextual compressors (e.g., \citet{zhao_longagent_2024}) are fundamentally inferior at maximizing the compression objective since the lack of context prevents them from performing critical functions like disambiguating terms or eliminating redundancy.
Also, RAG's filtering-style compression makes it inherently hard to understand the entire context unlike GoA.


\section{Experiments}  \vskip -0.05in

\subsection{Setup} \vskip -0.1in
\textbf{Datasets.}
We evaluate GoA using $F_1$ score on single and multi documents QA tasks on six question answering datasets from LongBench \citep{bai_longbench_2024}. 
Single-document QA datasets (NarrativeQA, Qasper, MultifiledQA) test reading comprehension of long stories (movie/book), academic papers, and Wikipedia with both yes/no/unanswerable and open-ended types of questions. 
Multi-document QA datasets (HotpotQA, 2WikiM, Musique) have open-ended questions requiring multi-hop reasoning (upto 5-hop questions) and therefore require understanding the entire context. 
The datasets range from 3.6K to 18K tokens, on average (see \S \ref{appx:datasets} for dataset details). 
We use generic prompts such as instructing only output formats to minimize any biases coming from hand-crafted prompts (see \S \ref{appx:prompt} for exact prompts used for each benchmark).

\textbf{Backbone LLMs \& Baselines.}
We use Llama 3.1 8B and Qwen3 8B as backbone LLMs for all methods. As baselines, we consider a RAG with a strong retriever \citep{xiao2024c} with 300-word chunks, Chain-of-Agents \citep{zhang_chain_2024}, and a vanilla base model. To evaluate performance on inputs much longer than the available context, we test all methods under different context windows (2K, 8K, and 128K).
Following the methodology in \citet{bai_longbench_2024}, any input longer than the context window is truncated by removing content from its middle. Additional implementation details are deferred to \S \ref{subsec:implementation_details}.

\textbf{GoA Setup.}
We use BGE-M3 \citep{chen2024bge} that uses BERT \citep{devlin2019bert}-based architecture trained with the InfoNCE loss, matching our setting discussed in \S \ref{subsec:stat_relationship}. 
We set the number of subgraphs $k=4$ for all benchmarks, and find clusters with $k$-medoid clustering algorithm for its robustness to outliers.

\begin{table}[t]
\vskip -0.1in
\caption{Benchmark results on question answering tasks in LongBench. Scores are $F_1$ averages over three random seeds. Boldface highlights the best method for each context window.}
\begin{center}
\vskip -0.15in
\resizebox{\textwidth}{!}{%
\begin{tabular}{llccccccc}
\toprule
Base Model & Method \multirow{2}{*}{} & \multicolumn{3}{c@{\hspace{2pt}}}{Single Doc QA} & \multicolumn{3}{c@{\hspace{2pt}}}{Multi Doc QA} \\ 
&  & NarrativeQA & Qasper & MultifieldQA & Hotpot & 2WikiM & Musique & Average \\ \hline
Llama 3.1 8B & Vanilla (2K) & 27.05 & 58.03 & 47.83 & 50.68 & \textbf{49.22} & 30.33  & 43.86 \\ 
& Vanilla (8K) & 34.82 & 48.25 & 58.44 & \textbf{63.82} & 46.06 & \textbf{37.35}  & 48.12 \\ 
& Vanilla (128K) & 30.05 & 51.09 & 59.83 & 63.95 & 47.64 & 34.94  & 47.92 \\ \hline
& RAG (2K) & 32.68 & 51.65 & \textbf{54.71} & 46.18 & 47.46 & 26.37  & 43.18 \\ 
& RAG (8K) & \textbf{38.37} & 41.63 & \textbf{58.35} & 51.17 & 49.00 & 36.48  & 45.83 \\
\hline
& CoA (2K) & 25.90 & 59.32 & 49.76 & 44.27 & 32.02 & 19.96  & 38.54 \\ 
& CoA (8K) &  28.53 & 66.81 & 53.38 & 46.64 & 56.43 & 24.67  & 46.08 \\ \hline
\rowcolor{Gray} & GoA (2K) & \textbf{34.11} & \textbf{66.20} & 51.17 & \textbf{52.88} & 49.00 & \textbf{38.66}  & \textbf{48.67} \\
\rowcolor{Gray} & GoA (8K) & 36.76 & \textbf{68.05} & 53.02 & 46.59 & \textbf{59.23} & 31.40  & \textbf{49.18 }\\ \midrule

Qwen3 8B & Vanilla (2K) & 9.74 & 60.35 & 41.11 & 52.68 & 40.34 & 21.66  & 37.65 \\ 
& Vanilla (8K) & 19.09 & 69.67 & 48.37 & 55.28 & 47.44 & 30.47  & 45.05 \\ 
& Vanilla+YaRN (128K) & 23.55 & 70.32 & 50.95 & 59.14 & 50.75 & 38.13  & 48.81 \\ \hline
& RAG (2K) & 9.65 & \textbf{76.06} & \textbf{45.89} & 54.37 & 47.04 & 21.43  & \textbf{42.41} \\ 
& RAG (8K) & 19.07 & \textbf{72.04} & \textbf{51.35} & 53.43 & 52.37 & 35.47  & 47.29 \\
\hline
& CoA (2K) & \textbf{17.52} & 56.96 & 43.23 & 44.94 & \textbf{49.69} & 24.49  & 39.47 \\ 
& CoA (8K) & \textbf{22.35} & 57.12 & 49.20 & 52.45 & 63.33 & 24.43  & 44.81 \\ \hline
\rowcolor{Gray} & GoA (2K) & 14.51 & 57.59 & 42.12 & \textbf{56.79} & 46.28 & \textbf{35.06}  & 42.06  \\
\rowcolor{Gray} & GoA (8K) & 15.82 & 54.56 & 41.01 & 57.87 & \textbf{63.37} & \textbf{36.75}  & \textbf{48.98} \\  \bottomrule 
\end{tabular}%
}
\end{center}
\label{table:benchmark_lb1}
\vskip -0.15in
\end{table}

\subsection{Benchmark results} \label{exp_subsec:benchmark} \vskip -0.1in
\textbf{Question Answering Tasks in LongBench.} 
Table \ref{table:benchmark_lb1} shows the results on the LongBench QA tasks. Our findings demonstrate the \textit{superiority of GoA's flexible collaboration for handling long context data}. For both Llama 3.1 8B and Qwen3 8B, GoA consistently outperforms all baselines with the same context window. For instance, on Llama 3.1 8B with a 2K context, GoA achieves an average $F_1$ score of 48.67, a significant improvement over Vanilla (43.86), RAG (43.18), and CoA (38.54). 
Further, \textit{GoA significantly extends the effective context window}, generally allowing a model with a shorter context window to outperform the vanilla model with a larger one. Notably, GoA on Llama 3.1 8B with an 2K window (48.67 $F_1$) outperforms the 128K context window model (47.92 $F_1$). This confirms that GoA serves as a powerful and efficient method for processing long context inputs without expensive retraining.

\textit{Flexible collaboration is crucial for complex multi-hop reasoning.} 
Unlike GoA, CoA struggles on multi-document QA tasks like HotpotQA and Musique. This is because GoA's flexible collaboration guides an optimal information processing order, while the linear chain of CoA can be ill-suited for these non-sequential tasks. This leads to a significant performance gap between CoA and GoA on the multi-document QA tasks with a notable statistical significancy $p \approx 0.006$ under the paired t-test. This is a stark contrast to CoA's strong performance on single-document QA tasks, where its serial communication chain aligns well with a natural linear information flow.
Further, while RAG is shown to be very effective at single doc QA that requires to recall some factual information that directly answer the query, it also does not consistently improve the performance of the vanilla method (only five out of twelve cases, as opposed to nine cases in GoA).

\subsection{GoA's Compression Best Describes the Question} \label{exp_subsec:compress_pmi} \vskip -0.1in
We now evaluate whether GoA's compression strategy, based on greedy optimization of embedding similarity, successfully generates summaries that are maximally informative about the query. Figure \ref{fig:cossim} shows the cosine similarity between the final compressed summary and the query for each method, providing strong evidence for our hypothesis. Specifically, GoA consistently produces summaries that are most semantically aligned with the query. It achieves the highest mean and median similarity and, crucially, the lowest variance, confirming that our greedy optimization algorithm effectively maximizes the intended objective. In contrast, the baselines reveal their structural weaknesses. RAG's lower similarity demonstrates the limitations of simple filtering, which cannot synthesize disparate information into a holistically relevant summary. CoA's high variance highlights the pitfalls of its static left-to-right collaboration structure: its performance degrades significantly when a document’s key information is not organized linearly. This analysis confirms that GoA’s input-dependent collaboration is key to generating high-quality summaries.

\subsection{Semantic, Contextual Search is Crucial for GoA} \vskip -0.1in
Figure \ref{fig:ablation_retrieval} confirms that semantic, contextual search that adaptively selects the next chunk \textit{semantically} closest to the question \textit{given} the previous context (cf. \eqref{eq:greedy_node_finder}) is a key component of GoA.
\underline{\textit{Semantic search}}: Replacing semantic search with lexical BM25 leads to a larger performance degradation than replacing it with another semantic search method (ColBERT).
This confirms that GoA's performance is tied to its ability to identify semantically relevant information, as intended by our algorithm design (cf. \S \ref{subsec:stat_relationship}).
\underline{\textit{Contextual search}}: Ablating contextual search (choosing the next chunk based only on its similarity to the query without the intermediate summary) leads to a severe degradation in performance. Without context, GoA no longer optimizes the principled objective \eqref{eq:mi_objective_compression}, which can result in an incoherent and repetitive final summary.

\begin{figure}
    \vskip -0.25in
    \centering 
    \begin{minipage}{0.48\textwidth}
        \centering
        \includegraphics[width=0.7\linewidth]{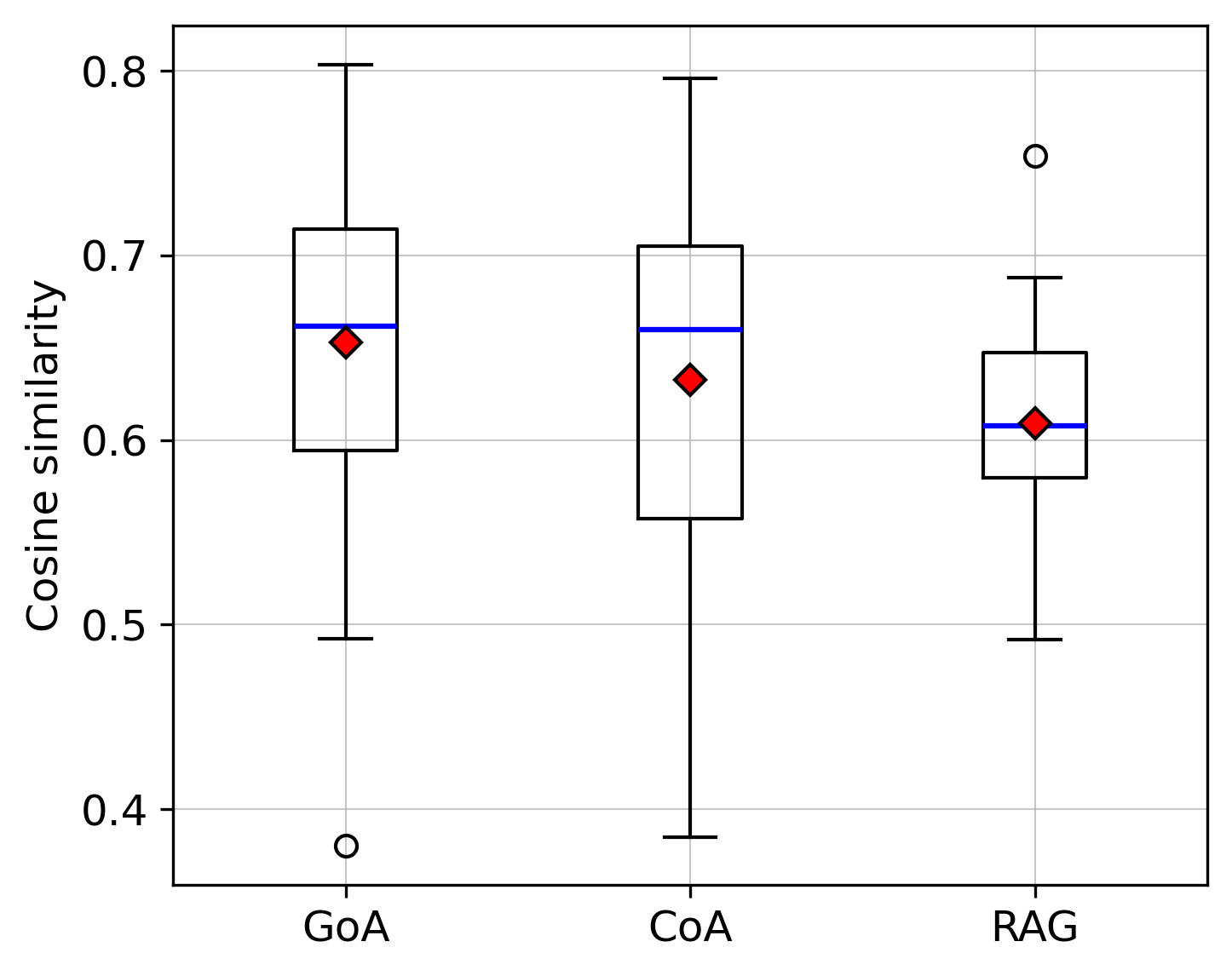}
        \caption{Proximity of a summarized input to a query measured by the cosine similarity in the embedding space. The box plot describes the interquartile range (IQR) with whiskers extending to 1.5 times the IQR. The blue line represents the median, while the red diamond indicates the mean.}
        \label{fig:cossim}
    \end{minipage}
    \hfill 
    \begin{minipage}{0.48\textwidth}
        \centering
        \includegraphics[width=0.8\linewidth]{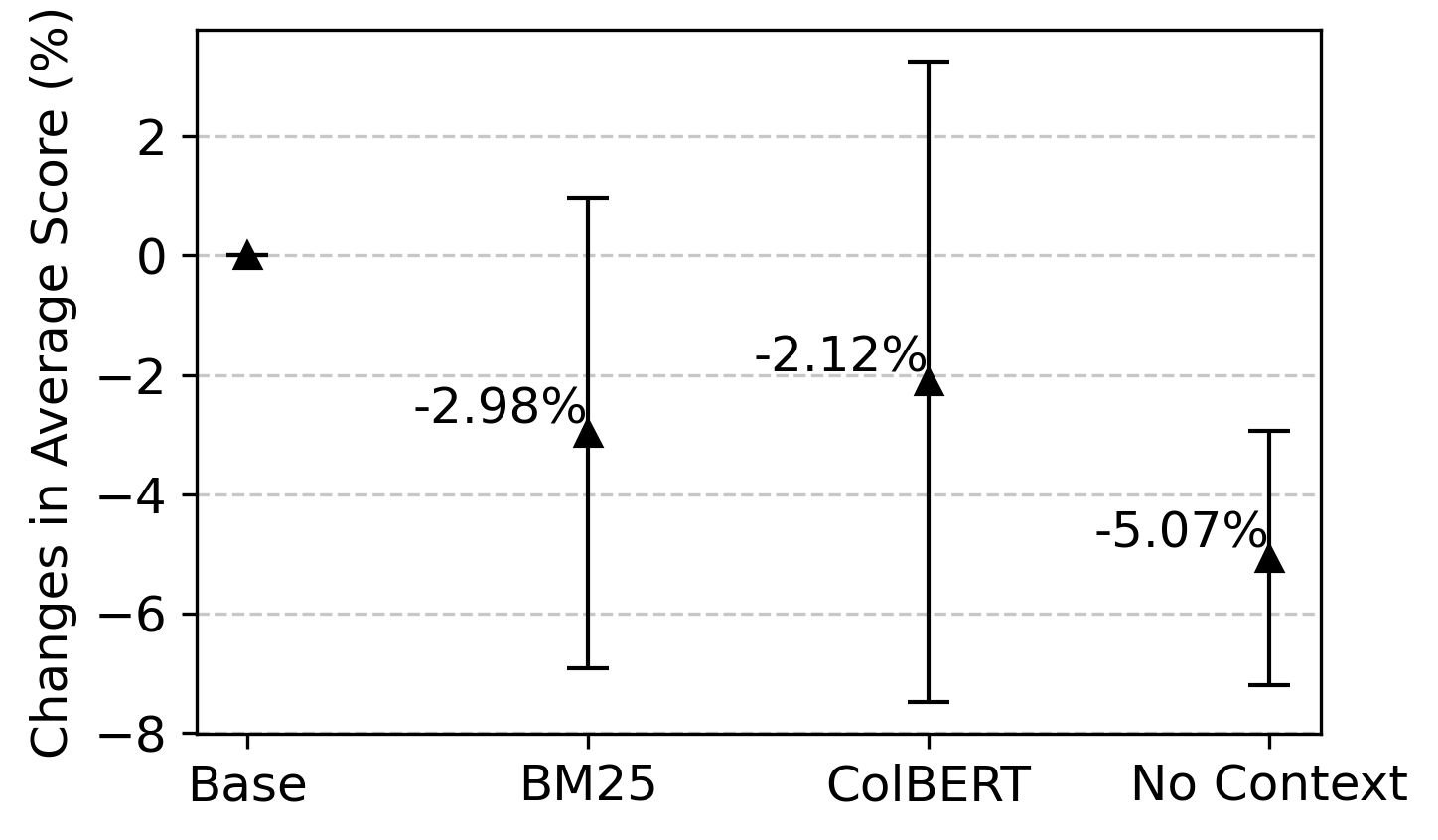}
         \caption{Performance of GoA under different retrieval methods with Llama 3.1 8B with 2K context window on Qasper, HotpotQA, and 2WikiM. $x$-axis represents different methods. $y$-axis represents average performance change from three different random seeds. The marker is the average change with the bar represent the standard error.}
            \label{fig:ablation_retrieval}
\end{minipage}
    \vskip -0.25in
\end{figure}

\subsection{Sensitivity Analyses} \vskip -0.1in
In Appendix \ref{exp_subsec:ablation}, we carefully analyze algorithmic design choices of GoA with a series of sensitivity analyses with Llama 3.1 8B with 2K context window on Qasper, HotpotQA, and 2WikiM. 
Findings can be summarized as: 
\textit{GoA's performance is robust to the choice of document embedding models (Figure \ref{fig:ablation_embedding}).} However, switching the embedding model by the average of GloVe word embedding \citep{pennington2014glove} causes a significant performance drop, which weakens a connection to the distributional similarity as opposed to the InfoNCE-trained sentence embedding models.
\textit{$k$-mediod clustering turns out to be suboptimal, being dominated by $k$-means and spectral clustering (Figure \ref{fig:ablation_embedding}).} A plausible explanation is that our dataset is either less influenced by outliers than anticipated or exhibits a complex, non-globular structure more effectively captured by spectral clustering.
\textit{The number of subgraphs reveals a key trade-off (Figure \ref{fig:goa_cluster_ablation}),} achieving the best score at $k=2$ and $k=4$ for Llama and Qwen, respectively. 
Too much parallelism (e.g., $k = 8$ as in without contextual compressors) creates chains too short for meaningful context, while too little (e.g., $k=1$ as in CoA) results in a single long chain that ``forgets'' early information.


\section{Related Work} \label{sec:related_work}  \vskip -0.05in

\textbf{Multi-Agent Systems.}
The direction closest to ours is multi-agent system-based approaches that leverage multiple LLM agents collaboratively solve complex tasks with long context inputs. A common paradigm in these systems is to define agents with specialized roles and orchestrate their interactions through a pre-defined workflow. For instance, agents may be specialized for atomic actions like question decomposition and fact verification \citep{zhao_reagent_2025}, or a supervising agent may select and manage other task-specific agents (e.g., for summarization or QA) \citep{zhao_longagent_2024}. Many other systems follow similar patterns of role specialization and fixed interaction protocols \citep{zhang_belle_2025, chen_llm-based_2024,zhang_chain_2024, gur_real-world_2024, li_graphreader_2024, sun_pearl_nodate}. While often effective, these approaches rely on pre-defined collaboration strategies that may not be effective for every problem instance. In contrast, rather than prescribing fixed roles and interactions, GoA employs homogeneous worker agents guided by a fundamental objective for long context modeling (cf. \eqref{eq:mi_objective_compression}). This allows the optimal collaboration strategy for summarizing the input to emerge dynamically from the statistical structure of each query and context.

\textbf{Input Reduction Approaches.}
GoA is also conceptually related to approaches aiming for reducing long context input. 
RAG is arguably the most widely used approach in this category due to its simplicity and strong empirical performances (see survey \citep{gao2023retrieval} for more details).
However, RAG's filtering mechanism is inherently incapable of exploiting interdependencies across the entire context, limiting its effectiveness on tasks that require complex reasoning. To address this, subsequent works have employed LLMs as more sophisticated compressors. These methods may progressively summarize chunks into a tree structure \citep{chen_walking_2023}, maintain a compressed ``gist memory'' for retrieval \citep{lee_human-inspired_2024}, or use LLMs to filter redundant sentences \citep{jin_long-context_2024}. While an improvement over simple filtering, these approaches typically perform compression locally on isolated chunks, lacking a global objective to guide the synthesis of information. GoA addresses this shortcoming directly: its principled objective guides agents to construct a globally coherent summary that is maximally informative for the given query.


\section{Conclusion \& Future Work}  \vskip -0.05in
We introduce a principled information-theoretic perspective on long context modeling, recasting it as a compression problem. This framing yields a mutual information objective that guides the design of multi-agent systems, moving beyond heuristic approaches. We propose GoA, a framework that operationalizes this objective by dynamically constructing an input-dependent collaboration graph. Unlike prior methods that rely on a static hand-crafted collaboration structure, GoA's emergent structure is tailored to each input and query. As a result, GoA significantly extends the effective context window of LLMs, demonstrably outperforming strong retrieval-based and static multi-agent baselines.

We remark several promising future direction that can further generalize a flexible descriptive approach to multi-agent collaboration for long context modeling:
\underline{\textit{Generalizing the Collaboration Graph}}: 
The linear forest structure in GoA could be generalized to more complex topologies, such as directed acyclic graphs, to enable richer information flow. Furthermore, the disjoint partitioning of chunks could be relaxed to allow key information to act as a shared context between subgraphs, albeit with the expense of repeating inference on the shared context.
\underline{\textit{Learning to Collaborate}}: While we present GoA as a training-free method, its information-theoretic objective provides a natural loss function for learning collaboration. This could involve training a policy to construct the optimal graph or fine-tuning agents for query-aware compression to directly maximize the objective in Eq. \eqref{eq:mi_objective_compression}.


\bibliography{iclr2026_conference}
\bibliographystyle{iclr2026_conference}

\appendix
\renewcommand{\thefigure}{A\arabic{figure}}
\renewcommand{\thetable}{A\arabic{table}}

\clearpage
\section{Proofs of Claims}
\subsection{Proof of Proposition \ref{prop:equiv_compress_obj}} \label{proof_prof_equiv_compress_obj}
\equivcompress*

\begin{proof}
    The Bayes optimal predictor $\hat{P}$ satisfies $\hat{P}(Y|g(X), Q) \overset{a.e.}{=} P(Y| g(X), Q)$ (Lemma 3.1 in \citep{jeon2024information}).
    Then, we get:
    \begin{align*}
        \E_{(X,Q,Y) \sim P}\left[ \log \hat{P}(Y| g(X), Q)\right] 
        &= \E_{(X,Q,Y) \sim P}\left[ \log P(Y| g(X), Q)\right] \\ 
        &= -H(Y|g(X), Q) + H(Y) - H(Y) \\
        &= I(Y; g(X), Q) - H(Y).
    \end{align*}
    The equivalence result follows from the constant nature of $H(Y)$ with respect to $g$.
\end{proof}

\subsection{Proof of Equivalence} \label{subsec:equiv_object}

We first formalize the observation in \citep{liu2024pointwise} about a strong correlation between the log odds ratio of the answer $\log \tfrac{P(y| q, x)}{1 - P(y| q, x)} $ and the pointwise mutual information $PMI(q, x) \triangleq \log \tfrac{P(q, x)}{P(q)P(x)}$. 

\begin{observation}[Generalized version of Hypothesis 2.1 in \citet{liu2024pointwise}]
    For any choices of $(x, q, g)$, there exist constants $\alpha > 0$ and $\beta$ such that 
    \begin{equation}
        \log \frac{P(y| q, g(x))}{1 - P(y|q,g(x))} = \alpha PMI(q, g(x)) + \beta + \epsilon(q, x), 
    \end{equation}
    where $y$ is an answer to the query $q$ and the input $x$ and $\epsilon(q,x)$ is a zero-mean residual uncorrelated with the choice of $g$.
\end{observation}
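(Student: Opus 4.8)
The plan is to reduce the generalized statement to Hypothesis~2.1 of \citet{liu2024pointwise}, which already asserts exactly this linear relationship for a raw input--query pair, and then to show that introducing the compression map $g$ does not break it. The enabling observation is that $g(x)$ is itself a string over the same space as $x$ --- a query-conditioned measurable function of the input --- and hence is an admissible ``input'' from the viewpoint of the base hypothesis. So the first step is purely to apply Hypothesis~2.1 to the pair $(q, g(x))$ rather than $(q, x)$: this hands us the identity $\log \tfrac{P(y \mid q, g(x))}{1 - P(y \mid q, g(x))} = \alpha\, PMI(q, g(x)) + \beta + \epsilon_0(q, g(x))$ with $\epsilon_0$ the residual of the base hypothesis, and we set $\epsilon(q,x) \triangleq \epsilon_0(q, g(x))$.

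Next I would pin down $\alpha > 0$ and $\beta$ so that they are genuinely universal, i.e.\ independent of $g$. The clean way is to define them as the coefficients of the population least-squares projection of the answer log-odds onto $PMI(q, c)$, taken over the law of $(q, x) \sim P$ and over all admissible compressors $g$ (with $c = g(x)$, the identity included). By orthogonality of the projection, the residual $\epsilon_0$ is then automatically zero-mean and uncorrelated with $PMI(q,c)$; and because $\alpha, \beta$ were fixed independently of $g$, specializing $c = g(x)$ reproduces the stated decomposition with the same $\alpha, \beta$ for every $g$, the residual retaining zero mean. With this setup, the only remaining content of the Observation is the claim that $\epsilon_0(q, g(x))$ is uncorrelated with the choice of $g$.

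The hard part, and indeed the part that remains a hypothesis rather than a theorem --- paralleling the empirical status of Hypothesis~2.1 --- is exactly this last claim. The argument I would give is structural: $P(y \mid q, c)$ is a property of the data distribution (equivalently, of the Bayes-optimal predictor of \S\ref{subsec:compression_setup}), and depends on the context $c$ only through its informational content relative to $q$, whose leading-order summary is $PMI(q,c)$; the residual $\epsilon_0$ captures the higher-order discrepancy, which --- being ``noise'' around the universal PMI--log-odds trend --- should not systematically track the \emph{mechanism} (retrieval, agentic summarization, or no compression) used to produce $c$. Formally I would posit that, conditionally on $PMI(q, g(X))$, the residual is mean-independent of $g$, so that $\mathrm{Cov}\big(\epsilon_0(Q, g(X)),\, \phi(g)\big) = 0$ for any statistic $\phi$ of the compressor, and I would support this --- as \citet{liu2024pointwise} do for the ungeneralized version --- by an empirical check that the residual distribution is stable across the compressors of interest (RAG, CoA, GoA, identity). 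This is precisely what is needed downstream: when passing to expectations in \S\ref{subsec:equiv_object}, the $g$-uncorrelated zero-mean residual contributes only a $g$-independent offset, so the ordering of compressors under the answer-predictability objective coincides (up to that offset) with the ordering under $PMI(q, g(x))$, and hence under the tractable surrogate $I(Q; g(X))$.
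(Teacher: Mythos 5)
The paper does not actually prove this Observation: it is stated as an assumption --- a formalization and generalization of the empirical Hypothesis~2.1 of \citet{liu2024pointwise} --- and is then consumed directly in \S\ref{subsec:equiv_object} to pass from $I(Y;g(X),Q)$ to $I(Q;g(X))$. Your proposal is therefore not in conflict with the paper; if anything it is more careful than the paper's treatment. Your reduction (apply the base hypothesis to the pair $(q,g(x))$, since $g(x)$ is just another admissible context) is the natural reading of why the generalization is plausible, and your least-squares construction of $\alpha,\beta$ correctly addresses a point the paper glosses over: for the downstream step in which both sides are averaged and compared across compressors (Eq.~\ref{aux_equiv2}), the constants must be uniform over $(x,q)$ \emph{and} over $g$, which the paper's ``for any choices of $(x,q,g)$, there exist constants'' phrasing does not literally guarantee. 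You are also right to flag that the only genuinely new content beyond the base hypothesis --- that the residual is zero-mean and uncorrelated with the choice of $g$ --- cannot be derived and must remain an empirical posit; the paper makes exactly this move implicitly by labeling the statement an Observation rather than a Proposition and offering no argument. In short: there is no gap in your proposal relative to the paper, because the paper has no proof to match; your write-up makes explicit the assumptions the paper leaves tacit, and the honest conclusion in both cases is that the statement is a modeling hypothesis validated empirically, not a theorem.
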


We establish the equivalence between maximizing $I(Y; g(X), Q)$ and the log odds ratio of the answer with respect to $g$ as 
\begin{multline} \label{aux_equiv1}
    I (Y; g(X), Q) \triangleq \E \left[ \log \frac{P(Y| Q, g(X))}{P(Y)}  \right] \\
    \iff \E \left[ \log P(Y| Q, g(X)) \right] 
    \iff \E \left[ \log \frac{P(Y| Q, g(X))}{1 - P(Y| Q, g(X))} \right]
\end{multline}
where the first equivalence is obtained by removing the term constant with respect to $g$ and the second equivalence is due to the monotonicity.

Under Observation 1 above, for each observed triple $(y,q,x,g)$, we get 
\begin{equation}
    \log\frac{P(y| q, g(x))}{1 - P(y| q, g(x))} = \alpha PMI(q, g(x)) + \beta + \epsilon(q, x).
\end{equation}

Taking expectation both sides gives 
\begin{equation} \label{aux_equiv2}
    \E \left[ \log \frac{P(Y| Q, g(X))}{1 - P(Y| Q, g(X))} \right] =  \alpha \E [PMI(q, g(x))] + \beta,
\end{equation}
where maximizing the left-hand side over $g$ is equivalent to maximizing $\E [PMI(q, g(x))]$.

Finally, recall the identity $\E [PMI(Q, g(X))] = I(Q; g(X))$, so we connect $\max_g I(Y; g(X), Q) $ with $\max_g I(Q; g(X))$ by combining \eqref{aux_equiv1} and \eqref{aux_equiv2}.

\section{Additional Theoretical Result} \label{proof_generalize_compress_obj}
\begin{restatable}{proposition}{genequivcompress} \label{prop:generalize_compress_obj}
    Suppose LLM $\hat{P}$ has a uniform bound for the expected error $\E_{(X,Q,Y) \sim P} [ \KL (P(Y|g(X), Q)) || \hat{P}(Y|g(X), Q)) ] \leq \epsilon$ for any compressor $g$. 
    Consider two compression functions $g_1$ and $g_2$ such that $I(Y; g_1(X), Q) > I(Y; g_2(X), Q) + \delta$ for $\delta > 0$. 
    Then if $\delta > \epsilon$, it holds that:     
    \begin{equation}
        \E_{(X,Q,Y) \sim P}\left[ \log \hat{P}(Y| g_1(X), Q)\right] > \E_{(X,Q,Y) \sim P}\left[ \log \hat{P}(Y| g_2(X), Q)\right].
    \end{equation}    
\end{restatable}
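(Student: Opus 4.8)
The plan is to relate each expected log-likelihood back to the mutual information objective using the same decomposition as in Proposition~\ref{prop:equiv_compress_obj}, but now accounting for the gap introduced by $\hat{P}$ being only approximately Bayes optimal. For a generic compressor $g$, write
\begin{align*}
    \E_{(X,Q,Y) \sim P}\left[ \log \hat{P}(Y| g(X), Q)\right]
    &= \E\left[ \log P(Y| g(X), Q)\right] - \E\left[ \log \tfrac{P(Y| g(X), Q)}{\hat{P}(Y| g(X), Q)}\right] \\
    &= I(Y; g(X), Q) - H(Y) - \E_{(X,Q) \sim P}\left[ \KL\big(P(Y|g(X), Q) \,\|\, \hat{P}(Y|g(X), Q)\big)\right],
\end{align*}
where the first line just inserts and removes $\log P(Y|g(X),Q)$, and the second uses the identity $\E[\log P(Y|g(X),Q)] = I(Y;g(X),Q) - H(Y)$ already established in the proof of Proposition~\ref{prop:equiv_compress_obj}, together with the fact that the second expectation is exactly the expected KL divergence. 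Denote that expected KL term by $\epsilon(g) \geq 0$; the hypothesis gives $0 \le \epsilon(g) \le \epsilon$ uniformly over $g$.

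Next I would apply this identity to $g_1$ and $g_2$ and subtract. The $H(Y)$ terms cancel, yielding
\begin{align*}
    \E\left[ \log \hat{P}(Y| g_1(X), Q)\right] - \E\left[ \log \hat{P}(Y| g_2(X), Q)\right]
    = \big(I(Y; g_1(X), Q) - I(Y; g_2(X), Q)\big) - \big(\epsilon(g_1) - \epsilon(g_2)\big).
\end{align*}
By assumption the first parenthesized quantity exceeds $\delta$, and since $\epsilon(g_1) - \epsilon(g_2) \le \epsilon(g_1) \le \epsilon$ (using $\epsilon(g_2) \ge 0$ and the uniform bound), the whole expression is strictly larger than $\delta - \epsilon > 0$ under the hypothesis $\delta > \epsilon$. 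This gives the claimed strict inequality.

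The only subtlety worth being careful about is the direction and well-definedness of the KL term: we need the nonnegativity of $\KL(P \| \hat{P})$ and the fact that it is precisely $\E[\log P - \log \hat{P}]$ under $P$, so that the sign works out and the term appears with a minus sign in the log-likelihood expression. Everything else is bookkeeping. I expect no real obstacle here; the main thing is to state the decomposition cleanly and to note that $H(Y)$ is finite (or, if one wants to avoid finiteness assumptions, to carry out the subtraction before isolating $H(Y)$, so it never appears alone). I would write the proof in the ``subtract the two instances'' form to sidestep that entirely.
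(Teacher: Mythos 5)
Your proposal is correct and follows essentially the same route as the paper's proof: the same decomposition of the expected log-likelihood into $I(Y;g(X),Q) - H(Y)$ minus the expected KL term, followed by subtracting the two instances so that $H(Y)$ cancels and bounding the KL difference by $\epsilon$ using nonnegativity of $\KL(P\,\|\,\hat{P})$. Your remark about carrying out the subtraction before isolating $H(Y)$ is a minor but reasonable refinement of the same argument.
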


\begin{proof}
    For any compression function $g$ and predictor $\hat{P}$, we can decompose the expected log-likelihood as
    \begin{gather} 
        \E\left[ \log \hat{P}(Y| g(X), Q)\right] = \E\left[ \log P(Y| g(X), Q)\right] - \E[ \KL (P(Y|g(X), Q)) || \hat{P}(Y|g(X), Q))] \label{eq:ll_decomp} \\
        =  I(Y; g(X), Q) - H(Y) - \E[ \KL (P(Y|g(X), Q)) || \hat{P}(Y|g(X), Q))]  \label{eq:application_prop1}
    \end{gather}
    where \eqref{eq:application_prop1} is based on Proposition \ref{prop:equiv_compress_obj}. Here, all expectations are with respect to $(X,Q,Y) \sim P$ and omitted for brevity.

    Then, for $g_1$ and $g_2$ such that $I(Y; g_1(X), Q) > I(Y; g_2(X), Q) + \delta$ for $\delta > 0$, we get the desired result as:
    \begin{align*}
        & \E \left[ \log \hat{P}(Y| g_1(X), Q)\right] - \E \left[ \log \hat{P}(Y| g_2(X), Q)\right] \\ 
        & =  I(Y; g_1(X), Q) - I(Y; g_2(X), Q) \\  
        & \qquad  - \E\left[ \KL (P(Y|g_1(X), Q) | \hat{P}(Y|g_1(X), Q)) - \KL(P(Y|g_2(X), Q) | \hat{P}(Y|g_2(X), Q)) \right] \\ 
        & > \delta - \epsilon > 0.
    \end{align*}
\end{proof}

\section{Ablation Study \& Sensitivity Analysis} \label{exp_subsec:ablation}

\begin{figure}
     \centering
     \includegraphics[width=0.5\textwidth]{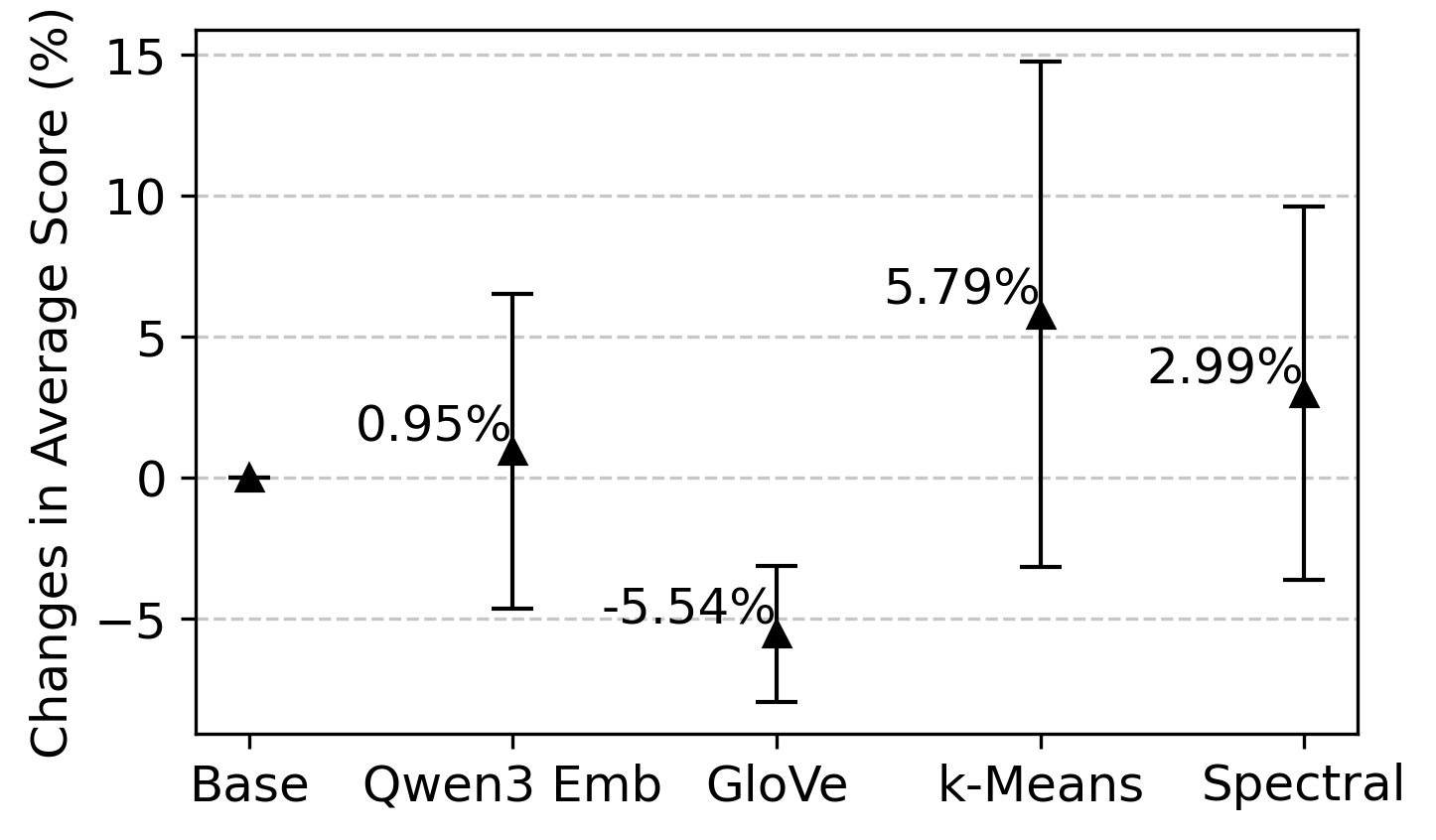}
     \caption{Performance of GoA under different embedding and clustering methods with Llama 3.1 8B with 2K context window on Qasper, HotpotQA, and 2WikiM. $x$-axis represents different methods. $y$-axis represents average performance change from three different random seeds. The marker is the average change with the bar represent the standard error.
     }
     \label{fig:ablation_embedding}
\end{figure}

\textit{GoA's performance is robust to external models.}
As shown in Figure \ref{fig:ablation_embedding}, GoA is robust to the choice of modern embedding models (BGE-M3 vs. Qwen3 Embedding \citep{zhang2025qwen3}). 
However, switching to the average of GloVe word embedding \citep{pennington2014glove} causes a significant performance drop. This could be contributed to its weak connection to the distributional similarity as opposed to the InfoNCE-trained sentence embedding models.

\textit{$k$-mediod clustering turns out to be suboptimal.}
Surprisingly, we found that $k$-means and spectral clustering outperformed our initial choice of $k$-medoids.
In the initial phase of our work, we opted to choose the $k$-medoid algorithm since it is less sensitive to outliers and operates on actual data points as cluster representatives, enabling representative-based search strategies. In this regard, a plausible explanation is that our dataset is either less influenced by outliers than anticipated or exhibits a complex, non-globular structure more effectively captured by spectral clustering.

\textit{The number of subgraphs reveals a key trade-off.}
Figure \ref{fig:goa_cluster_ablation} shows a clear hill-shaped curve for performance as $k$ changes, achieving the best score at $k=2$ and $k=4$ for Llama and Qwen, respectively. 
This illustrates the fundamental trade-off between context length and parallelism.
For large $k$, each chain is too short to build meaningful context, degrading performance at the expense of supporting massive parallelization. 
Conversely, for $k = 1$, GoA becomes a single, long chain similar to CoA. The significant performance drop here highlights the "forgetting phenomenon" in long-chain models: information from early chunks is progressively diluted in the iterative summarization process. GoA's use of shorter, parallel chains ($k>1$) effectively mitigates this issue.

\begin{figure}
    \centering
    \includegraphics[width=0.45\linewidth]{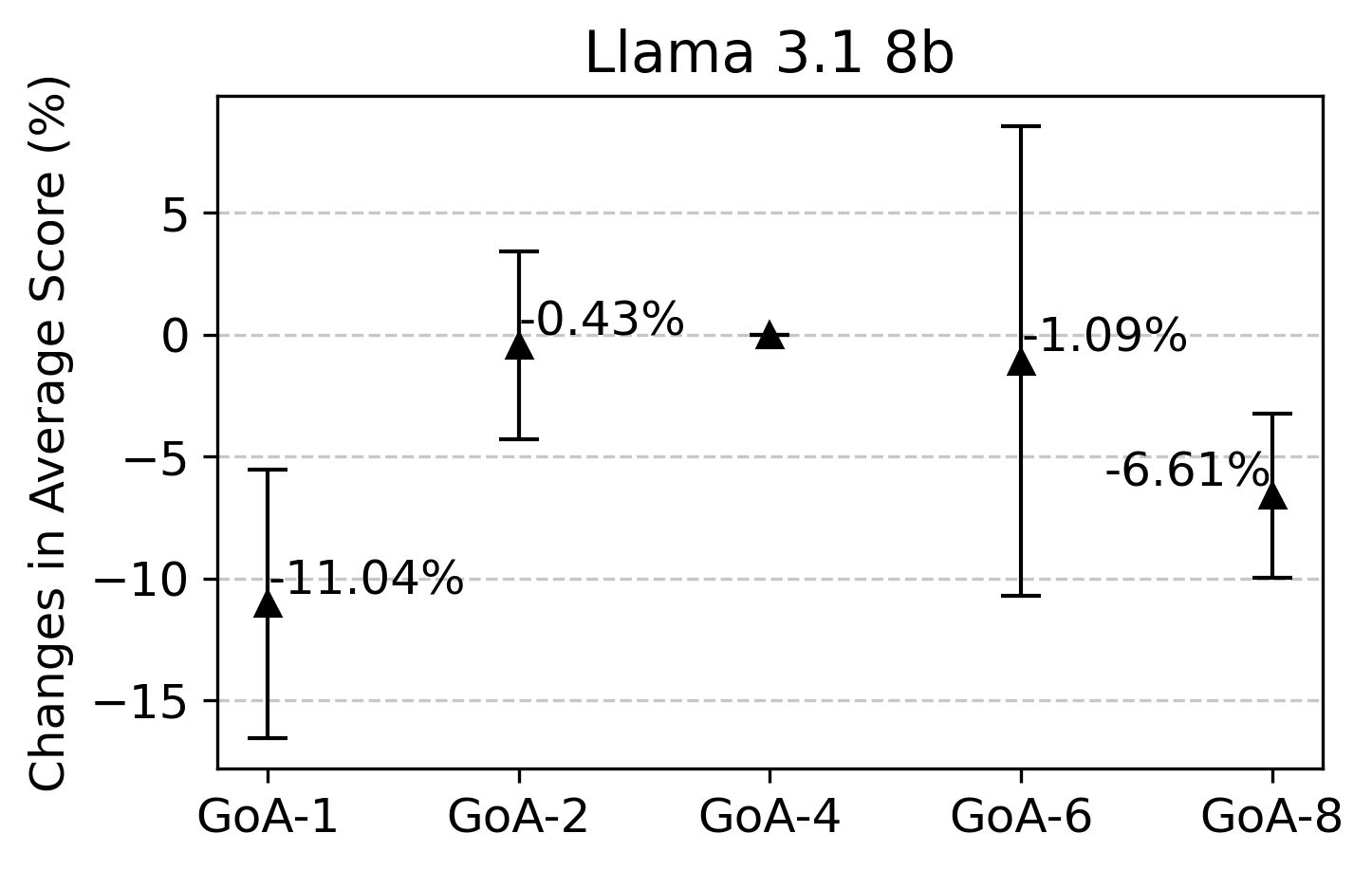}
    \includegraphics[width=0.45\linewidth]{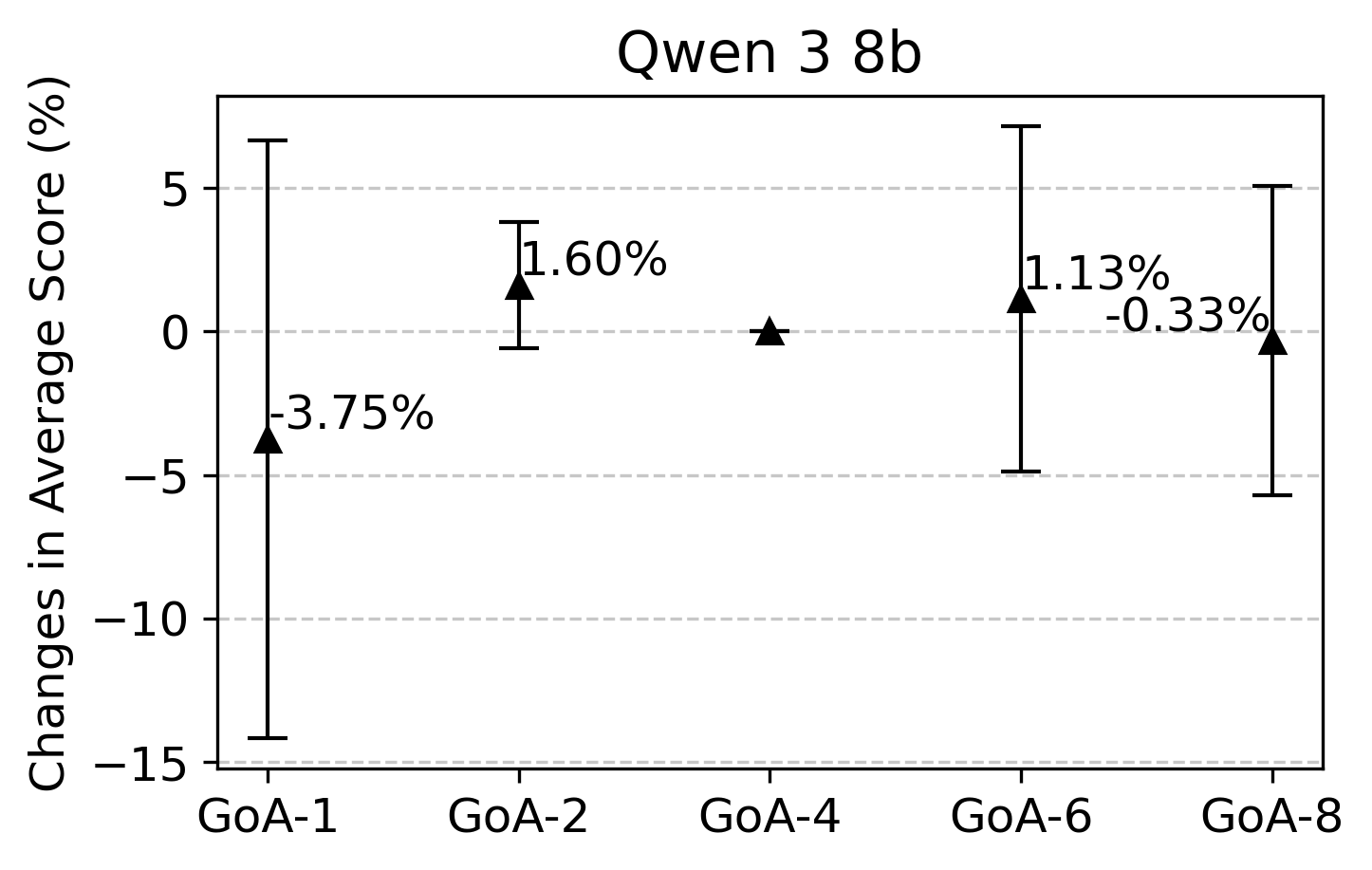}
    \caption{Performance of GoA under different number of subgraphs with Llama 3.1 8B with 2K context window on Qasper, HotpotQA, and 2WikiM. In $x$-axis, GoA-$k$ represents GoA with $k$ number of subgraphs. $y$-axis represents average performance change from three different random seeds. The marker is the average change with the bar represent the standard error.
    }
    \label{fig:goa_cluster_ablation}
\end{figure}

\section{Discussion} \label{appx:discussion}
The per-sample collaboration in GoA generalizes the static left-to-right communication in CoA, suggesting GoA's superiority when semantic and distributional similarities are equivalent (cf. \S \ref{subsec:stat_relationship}) and the greedy algorithm is effective (cf. \S \ref{subsec:algo_design}). However, it remains unclear how GoA performs against other model-agnostic approaches like RAG and multi-agent systems. To investigate this, we analyze the implicit compression function within these existing methods.
To this end, we formalize GoA and CoA as a contextual compressor $g^{\text{ctx}}(\texttt{chunk}; \texttt{context})$, where prior information is used to compress subsequent chunks. 
For example, the CoA compression scheme (a special case of GoA) can be expressed as:
$\tilde{x}^{\text{ctx}} = [g^{\text{ctx}}(c_1), g^{\text{ctx}}(c_2; c_1), \cdots, g^{\text{ctx}}(c_l; c_1,\cdots, c_{l-1})]$.

\textbf{Multi-Agent Systems without Contextual Compressors.}
Despite differences in prompts and overall communication protocols, multi-agent systems involve an LLM-based compressor that operates for separate chunks (e.g., the collaborative reasoning steps in \citet{zhao_longagent_2024}), with the notable exception of CoA and multi-agent debates \citep{wynn2025talk}. 
This chunk-wise compression through an LLM, denoted as $g^{\text{Agent}}$, can be formalized as \begin{equation}
    \tilde{x}^{\text{Agent}} \triangleq g^{\text{Agent}}(x) = [g^{\text{Agent}}(c_1), g^{\text{Agent}}(c_2), \cdots, g^{\text{Agent}}(c_l)].
\end{equation}

The lack of contextual information in $g^{\text{Agent}}$ makes it fundamentally inferior to $g^{\text{ctx}}$ in maximizing the compression objective $I(Y; g(X), Q) = I(Y; g(C_1), Q) \sum_{i=2}^{l} I(Y; g(C_i), Q | g(C_{1}), \cdots, g(C_{i-1}))$ because it discards context before compression is applied.
Specifically, it creates an irreversible information bottleneck from the Markov chain  $Y \rightarrow (C_1, C_2, \cdots, C_l) \rightarrow C_i \rightarrow  g^{\text{agent}}(C_i)$. 
Consequently, this approach is limited to optimizing $I(Y; g^{\text{agent}}(C_i), Q | C_1, \dots, C_{i-1})$ for each chunk, falling short of the contextual compressor's objective, which includes the prior context $(C_1, \dots, C_{i-1})$ within the compression function itself. This loss of context prevents the compressor from performing critical functions like disambiguating terms or eliminating redundancy.

\textbf{RAG.}
RAG involves an implicit compression algorithm $g^{\text{RAG}}$ that retains all information if a chunk is among the top-$\kappa$ relevant chunks, and otherwise removes it entirely: 
$\tilde{x}^{\text{RAG}} \triangleq g^{\text{RAG}}(x) = [g^{\text{RAG}}(c_1), \cdots, g^{\text{RAG}}(c_l)]$  where
\begin{equation}
    g^{\text{RAG}}(c_i) = \begin{cases}
                            c_i & \text{if } c_i \in \text{Top-}\kappa \text{ relevant chunks among } \{ c_j\}_{j=1}^{l} \\
                            \emptyset & \text{otherwise } 
                            \end{cases}.
\end{equation}

The compression algorithm $g^{\text{RAG}}$ has severe structural limitations compared to $g^{\text{ctx}}$ and $g^{\text{Agent}}$. 
Specifically, the filtering-style compression in RAG makes it inherently hard to understand the entire context or capture interdependency between multiple chunks, leading to its deficient performance on complex tasks such as summarization and multi-hop reasoning.
Besides, from the perspective of empirical laws on diminishing returns in mutual information, which exhibit logarithmic information gains with more tokens, RAG does not exploit initial sharp increases in mutual information that could have been obtained by just retaining few tokens from each chunk.

\section{Additional Details} \label{appx}

\subsection{Implementation Details} \label{subsec:implementation_details}

For all experiments, we adopt a generic setting to minimize biases from dataset-specific configurations. 
We use a decoding temperature of 0.1 and a top-p of 0.9. 
For manager LLMs and vanilla LLMs that produce the final answer, the maximum number of output tokens is set to 128.
For worker LLMs, the maximum number of output tokens is set to 256 for models with a 2K context window and 1024 for models with an 8K context window. 
When an input's length exceeds a model's context limit, we truncate the middle of the input, a strategy shown to have the least performance degradation \citep{bai_longbench_2024}. The final answer is extracted from the generated text enclosed within \texttt{<answer>ExtractedAnswer</answer>} tags. Specific prompts used are detailed in \S \ref{appx:prompt_vanilla_llm} and \S \ref{appx:prompt}.

\;

\subsection{Prompts for Vanilla LLMs} \label{appx:prompt_vanilla_llm}

The prompts for the baseline LLM evaluations are adapted from LongBench \citep{bai_longbench_2024}. We remove all task-specific instructions, preserving only the guidelines for output formatting. This foundational prompt structure is also used for our Graph of Agents (GoA) method and the Chain-of-Agents (CoA) baseline.

\begin{prompt}[Vanilla LLM Prompt (Multi-Document QA).]
\begin{small}    
\begin{Verbatim}[breaklines=True]
Answer the question based on the given passages. Only give me the answer and do not output any other words. Put your answer inside the answer tag, like <answer>your answer</answer>.

The following are given passages.
{context}

Answer the question based on the given passages. Only give me the answer and do not output any other words. Put your answer inside the answer tag, like <answer>your answer</answer>.

Question: {input}
Answer:
\end{Verbatim}
\end{small}
\end{prompt}

\;

\begin{prompt}[Vanilla LLM Prompt (Single-Document QA).]
\begin{small}
\begin{Verbatim}[breaklines=True]
Answer the question based on the given passages as concisely as you can, using a single phrase or sentence if possible. If the question cannot be answered based on the information in the given passages, write "unanswerable". If the question is a yes/no question, answer "yes", "no", or "unanswerable". Do not provide any explanation. Put your answer inside the answer tag, like <answer>your answer</answer>.

The following are given passages.
{context}

Answer the question based on the given passages as concisely as you can, using a single phrase or sentence if possible. If the question cannot be answered based on the information in the given passages, write "unanswerable". If the question is a yes/no question, answer "yes", "no", or "unanswerable". Do not provide any explanation. Put your answer inside the answer tag, like <answer>your answer</answer>.

Question: {input}

Answer:
\end{Verbatim}
\end{small}
\end{prompt}

\subsection{Prompts for CoA and GoA} \label{appx:prompt}
The prompts for CoA and GoA are largely identical. The worker prompts for both methods are intentionally generic, instructing the agent to summarize the input text based on the query to ensure broad applicability. 
The manager prompt, which generates the final answer, is the same for both methods as well. It uses the vanilla LLM prompt structure but adds the instruction: \texttt{However, the source text is too long and has been summarized. You need to generate a final summary based on the provided summary} \citep{zhang_chain_2024}.
The only distinction arises in how GoA structures the input for the manager agent. Because GoA generates summaries from multiple subgraphs, these are concatenated and presented to the manager, with each summary section clearly marked by the header \texttt{[Summary of Worker {i} out of k]}.

\begin{prompt}[Worker LLM Prompt.]
\begin{small}
\begin{Verbatim}[breaklines=True]
You need to read [SOURCE TEXT] and [PREVIOUS SUMMARY] and generate a summary to include them both. Later, this summary will be used for other agents to answer [QUERY], if any. So please write the summary that can include the evidence for answering [QUERY]. 

[SOURCE TEXT]: {input_chunk}

[PREVIOUS SUMMARY]: {prev_cu}

[QUERY]: {query}

Summary:
\end{Verbatim}
\end{small}
\end{prompt}

\begin{prompt}[Manager LLM Prompt (Multi-Document QA).]
\begin{small}
\begin{Verbatim}[breaklines=True]
Answer the question based on the provided information. Only give me the answer and do not output any other words. Put your answer inside the answer tag, like <answer>your answer</answer>.

The following are given passages. However, the source text is too long and has been summarized. You need to generate a final summary based on the provided summary:

{summary}                 

Answer the question based on the given passages. Only give me the answer and do not output any other words. Put your answer inside the answer tag, like <answer>your answer</answer>.

Question: {query}

Answer:
\end{Verbatim}
\end{small}
\end{prompt}

\begin{prompt}[Manager LLM Prompt (Single-Document QA)]
\begin{small}
\begin{Verbatim}[breaklines=True]
Answer the question based on the given passages as concisely as you can, using a single phrase or sentence if possible. If the question cannot be answered based on the information in the given passages, write "unanswerable". If the question is a yes/no question, answer "yes", "no", or "unanswerable". Do not provide any explanation. Put your answer inside the answer tag, like <answer>your answer</answer>.

The following are given passages. However, the source text is too long and has been summarized. You need to generate a final summary based on the provided summary:

{summary}

Answer the question based on the given passages as concisely as you can, using a single phrase or sentence if possible. If the question cannot be answered based on the information in the given passages, write "unanswerable". If the question is a yes/no question, answer "yes", "no", or "unanswerable". Do not provide any explanation. Put your answer inside the answer tag, like <answer>your answer</answer>.

Question: {query}

Answer:
\end{Verbatim}
\end{small}
\end{prompt}

\subsection{Dataset Details} \label{appx:datasets}

Below we give a brief background on each of the datasets in our evaluations, and in Table \ref{table:datasets} we list summary statistics showing the context length in terms of the number of tokens in samples from the datasets. In Figure \ref{fig:dataset_stats} we plot the distribution of token lengths for samples within each of the six Longbench v1 datasets.

\paragraph{Single Doc QA Datasets} 
In \texttt{NarrativeQA} \citep{kocisky_narrativeq_2018} the task is to answer questions about stories or scripts. The questions require understanding elements such as characters, plots, and themes. 
The \texttt{Qasper} \citep{dasigi_dataset_2021} dataset about NLP research papers, where NLP practitioners proposed the questions and answers. 
\texttt{MultiFieldQA} \citep{bai_longbench_2024} consists of data from long articles from 10 sources, including Latex papers, judicial documents, government work reports, and PDF documents indexed by Google. For each article, several graduate students annotated questions and answers.

\paragraph{Multi Doc QA Datasets} 
The \texttt{Hotpot} \citep{perez-2020-unsupervised} dataset requires reasoning across multiple passages to find an answer, where questions are sourced from Wikipedia.
\texttt{2WikiM} \citep{ho-etal-2020-constructing} requires the system to answer questions based on multiple given documents, which requires finding a reasoning path across the multi-hop questions.
\texttt{MuSiQue} \citep{trivedi-2022-musique} is a multi-hop question and answering dataset that is more difficult than HotpotQA (of which it is based on). The MuSiQue dataset contains more hops per sample, questions without answers, and additional distracting content.

\begin{table}
\caption{Summary statistics on the token lengths of context for each evaluation dataset. Datasets are tokenized with Llama 3.1 8B's tokenizer to find sequence lengths.}
\begin{center}
\resizebox{\textwidth}{!}{%
\begin{tabular}{lrrrrrr}
\toprule
  \multirow{2}{*}{} & \multicolumn{3}{c@{\hspace{2pt}}}{Single Doc QA} & \multicolumn{3}{c@{\hspace{2pt}}}{Multi Doc QA} \\ 
 & NarrativeQA & Qasper & MultifieldQA & Hotpot & 2WikiM & Musique  \\ \hline
Median & 31284 & 4536 & 6947 & 14208 & 6280 & 16106 \\ 
Mean & 29776 & 4921 & 6888 & 12779 & 7096 & 15542 \\ 
St. Dev & 17271 & 2603 & 3408 & 3772 & 3469 & 1567 \\
Minimum & 7963 & 1846 & 1286 & 1726 & 912 & 6482 \\ 
Maximum & 65270 & 21110 & 14947 & 16322 & 16319 & 16335 \\
\bottomrule
\end{tabular}%
}
\end{center}
\label{table:datasets}
\end{table}

\begin{figure}
    \centering
    \includegraphics[width=0.85\linewidth]{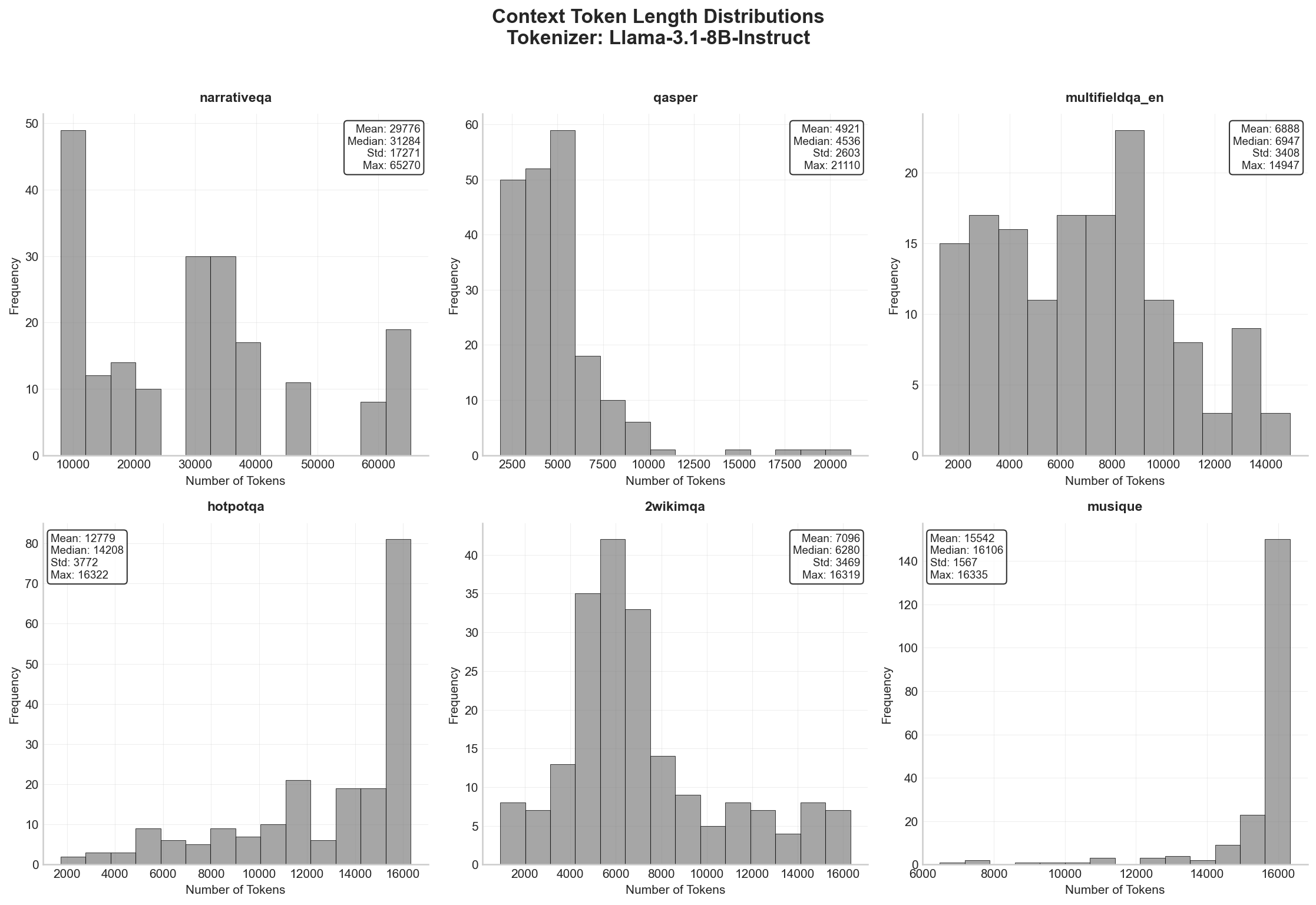}
    \caption{
    Distribution of context lengths for each dataset within Longbench v1, as measured by number of tokens. Texts are tokenized using the Llama 3.1 tokenizer.
}
    \label{fig:dataset_stats}
\end{figure}

\end{document}